\theoremstyle{thmstyleone}%
\newtheorem{theorem}{Theorem}
\newtheorem{lemma}{Lemma}
\begin{document}

\title[Depth-based Sampling and Steering Constraints for Memoryless Local Planners]{Depth-based Sampling and Steering Constraints for Memoryless Local Planners}

\author*[1]{\fnm{Thai Binh} \sur{Nguyen}}\email{thaibinhn@students.federation.edu.au}
\author[1]{\fnm{Linh} \sur{Nguyen}}\email{l.nguyen@federation.edu.au}
\author[1]{\fnm{Tanveer} \sur{Choudhury}}\email{t.choudhury@federation.edu.au}
\author[1]{\fnm{Kathleen} \sur{Keogh}}\email{k.keogh@federation.edu.au}
\author[2]{\fnm{Manzur} \sur{Murshed}}\email{m.murshed@deakin.edu.au}

\affil[1]{\orgdiv{Institute of Innovation, Science and Sustainability}, \orgname{Federation University Australia}, \orgaddress{\street{Northways road}, \city{Churchill}, \postcode{3842}, \state{Victoria}, \country{Australia}}}
\affil[2]{\orgdiv{School of Information Technology, Faculty of Science, Engineering and Built Environment}, \orgname{Deakin University}, \orgaddress{\street{221 Burwood Highway}, \city{Melbourne}, \postcode{3125}, \state{Victoria}, \country{Australia}}}



\abstract{By utilizing only depth information, the paper introduces a novel but efficient local planning approach that enhances not only computational efficiency but also planning performances for memoryless local planners. The sampling is first proposed to be based on the depth data which can identify and eliminate a specific type of in-collision trajectories in the sampled motion primitive library. More specifically, all the obscured primitives’ endpoints are found through querying the depth values and excluded from the sampled set, which can significantly reduce the computational workload required in collision checking.
On the other hand, we furthermore propose a steering mechanism also based on the depth information to effectively prevent an autonomous vehicle from getting stuck when facing a large convex obstacle, providing a higher level of autonomy for a planning system. Our steering technique is theoretically proved to be complete in scenarios of convex obstacles. To evaluate effectiveness of the proposed DEpth based both Sampling and Steering (DESS) methods, we implemented them in the synthetic environments where a quadrotor was simulated flying through a cluttered region with multiple size-different obstacles. The obtained results demonstrate that the proposed approach can considerably decrease computing time in local planners, where more trajectories can be evaluated while the best path with much lower cost can be found. More importantly, the success rates calculated by the fact that the robot successfully navigated to the destinations in different testing scenarios are always higher than 99.6\% on average.}

\keywords{quadrotor, depth image, steering, sampling, local planning}



\maketitle
\section{Introduction}\label{sec1}
\par In recent years, depth cameras have become available and more affordable, increasing their usage for robotic applications in the industrial and the research community \cite{Popovic2021}. One interesting use case is on quadrotors, where onboard depth sensors generate data for obstacle avoidance algorithms (path planners) to keep the drone safe. Many research groups have employed depth data in their studies on path planning for quadrotors. However, those depth data remain underutilised in existing works, giving us an opportunity to further exploit them to increase sampling efficiency and path planning performance.
\par The sampling-based technique is ubiquitously used as a trajectory planning approach for quadrotors in unstructured environments \cite{Quan2020}; however, the number of sampled collision-free trajectories is limited.
A sampled set with a large percentage of in-collision candidates will require a significant amount of computational power on collision checking. For instance, Florence $et\ al$. \cite{Florence2020} exploit their piecewise triple-double integrator in time for state modelling to produce a closed-form future maneuver library. Although their probabilistic maneuver library can be quickly propagated by uniformly sampling in the state space, it can also be rich in in-collision candidates. Since they utilise a k-d tree representation of local structures for the collision checking process, it requires up-front computation cost to construct that 3D structure. Also using k-d trees for collision checking, Lopez $et\ al$. \cite{Lopez2017} generate a set of motion primitives specified by desired speeds and heading angles. They uniformly sample possible headings from within the field of view (FoV) while keeping the forward speed constant. Those motion primitives can easily satisfy state constraints, such as remaining within the FoV of the sensor. This approach may work effectively in environments with sparse obstacles. However, in an obstacle-dense area that would increase the binary search of a single k-d tree query, an unrefined motion primitive set can cause an exploded computation load for collision checking. Additionally, in \cite{Ryll2019}, Ryll $et\ al$. uniformly sampled over yaw-heading and final Euclidean distance from the vehicle to generate a spatially and dynamically diverse set of motion primitives, which will also suffer from the same potential issue of heavy workload. While the works mentioned above fuse recent depth images into a local 3D structure, RAPPIDS - Rectangular Pyramid Partitioning using Integrated Depth Sensors \cite{Bucki2020} is more lightweight since it exploits only the latest depth image for directly partitioning the free space using a number of rectangular pyramids. However, RAPPIDS explores the space by randomly sampling trajectories' endpoints over a given range in the sensor's FoV (sampling space), which is not carefully constrained before sampling. This approach results in a low-quality sampled set that includes many in-collision trajectories when the quadrotor flies in an obstacle-filled area, burdening the collision checking.
\par Besides computational efficiency, planning performance can also be improved by directly utilising depth data. All those previously mentioned planners are classified as mapless and memoryless algorithms. Mapless planners \cite{Florence2020, Lopez2017, Ryll2019, Florence2018} only employ a local map that is just enough for local planning, and a memoryless one \cite{Bucki2020} plans directly on sensor data. Distinguishing from those, map-based planning systems \cite{Bircher2016, Oleynikova2016} generally integrate global maps \cite{Hornung2013, Voxbloxs} $a\ priori$ and a global planning algorithm such as RRT* \cite{Karaman2011} to guarantee planning completeness in tasks such as exploration, \cite{Cieslewski2017,Oleynikova2018,Lu2022,Wagner2022,Faria2018} or navigating toward a goal \cite{Cai2021,Grando2022,Jose2019}. All mapless and memoryless planners cannot access any map but use sensor data. They are likely to fail to reach the goal in cluttered environments. For example, \cite{Lee2021} introduces an autonomous system employing RAPPIDS to planning trajectories toward a goal in a cluttered environment. Their algorithm would stop planning when facing a large obstacle such as a wall. Therefore, there is still room for enhancement of the memoryless planner's performance and ability to autonomously plan and find a solution.
\par In those mapless and memoryless algorithms, motion primitives are sampled without considering available depth information. We can utilise available depth data to refine the motion primitive set before it comes to collision checking. This is where our depth-based sampling method contributes. Specifically, we find and reject obscured primitives' endpoints or constrain the sampling space only by querying depth values, automatically excluding many meaningless (in-collision) candidates in the sampled set. Since this procedure can be done in constant time, our approach will reduce the computational workload required in collision checking. Additionally, we will promote the planning autonomy level by proposing a steering mechanism, increasing the system's success rate. Our steering mechanism uses a heuristic algorithm coupling with a direction utility function to offer existing memoryless planners a higher autonomy in scenarios of convex obstacles.
\par To evaluate our proposed approaches, we build a complete planning system that is capable of autonomously navigating a quadrotor through a cluttered environment to a given destination. The planning system consists of only a memoryless local planner, and no explicit global planners are required. Results show that our depth-based steering mechanism can navigate through large-obstacle scenarios in which existing depth-based local planners will get stuck. That is, our approach enables successful autonomous flights toward a goal without the integration of high-level global planners. Another simulation benchmark also demonstrates that our depth-based sampling method will reach a considerably higher computational efficiency than the original uniform sampling technique used in previous works. To the best of our knowledge, this is the first work that integrates a depth-based sampling technique and an autonomous steering mechanism within a memoryless planner framework for navigating toward a goal in unknown cluttered environments.
\par Our contributions can be summarized as follows:
\par 1) We propose a depth-based sampling technique for memoryless planners to improve their computational efficiency using depth sensors.
\par 2) We propose a steering mechanism for an autonomous planning system. The planning system is capable of autonomously navigating aerial robots through cluttered environments to a goal, without employing any external global planners. Our system outperforms the previous memoryless planner \cite{Lee2021} in terms of escaping large obstacles (larger than the FoV).
\section{Memoryless Local Planning Problem}
\subsection{Memoryless Local Planning Pipeline}
Mapless and memoryless planners only use a very limited dynamic memory for collision checking. Mapless planners \cite{Florence2020, Lopez2017, Ryll2019, Florence2018} fuse the most recent historical sensor measurements into a local 3D data structure (e.g., k-d tree) but not an entire global map for collision checking. A memoryless planner \cite{Bucki2020} exploits only the latest sensor data. These types of planners lend themselves to local planning purposes.
\begin{figure}[htbp]
\centerline{\includegraphics[width=21pc]{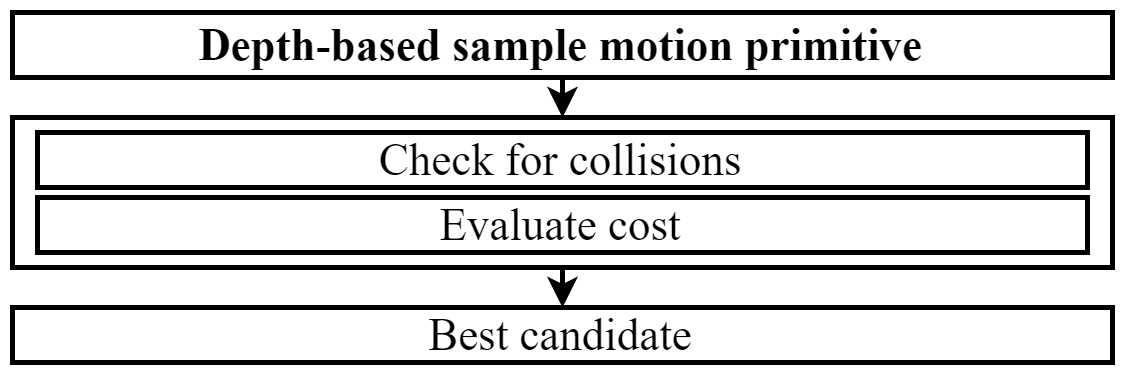}}
\caption{Depth-based sampling for local path planners.}
\label{system}
\end{figure}
\par Fig. \ref{system} describes the workflow of a memoryless planner using depth sensors, where the positions of collision checking and cost evaluating steps are interchangeable. Each step would use different computational resources, depending on the types of planners. For example, memoryless or mapless planners spend most of their resources on collision checking, while map-based planners may cost substantial computational power in both evaluating costs (e.g., volumetric exploration gain \cite{Dharmadhikari2020}) and collision checking (updating and querying a map). In both cases, motion primitive sampling is always the first step in the flow. Thus, its efficiency affects the performance of the entire workflow.
\subsection{Current Memoryless Local Planning Problem}
The uniform sampling technique \cite{Florence2020, Lopez2017, Ryll2019, Florence2018, Bucki2020} only offers an unrefined sample set, which might contain a number of meaningless (in-collision) candidates. Such a low-quality sampled set (low in collision-free samples) will burden the collision-checking stage, thus it can be a bottleneck to the system's performance. Checking for collision is the most computationally expensive task in the pipeline as we have to manipulate memory-consuming data structures such as depth images, k-d trees or Euclidean signed distance field (ESDF) \cite{Voxbloxs}. If the sampling space contains too many in-collision candidates, planners would cost much more computational power and time for the same given output in a planning sequence. As a result, many planning sequences would have been done for nothing, e.g., wasting power and time in evaluating many trajectories that would result in a collision. Therefore, refining the sampling space can be a promising technique.
\par To lower the requirement for computing resources, we need to eliminate a certain number of in-collision candidates in the sample set by considering depth data. We can determine a particular type of in-collision trajectory by comparing the trajectory endpoint's position with its projection in the depth matrix. If the endpoint is farther from the robot than its projection's depth, it is considered inside obstacles, and its trajectory is considered in-collision, and vice versa. By analysing the relative position between trajectories and the occupied space, we categorized them into four types: (type (0)) collision-free trajectories have their entire path outside the occupied space, (type (1)) trajectories have their endpoints too close to the occupied space (the robot will collide when getting to the endpoint), (type (2)) trajectories have their middle sections inside the occupied space and their endpoints outside (the robot will collide before getting to the endpoint) and (type (3)) trajectories have their endpoints inside the occupied space as in Fig. \ref{fig:collisions}. Although the collision-checking process can reject all types of in-collision trajectories, the fewer in-collision candidates that need to be checked, the fewer computing resources will be spent. Therefore, in this work, we will systematically exclude the type (3), obscured-endpoint candidates, by constraining the sampling space using raw depth data. Type (1) and type (2) will be rejected by the collision-checking step afterwards. Reducing the number of in-collision candidates that require checking will reduce needed computing resources.
\begin{figure}[htp]
\centerline{\includegraphics[width=21pc]{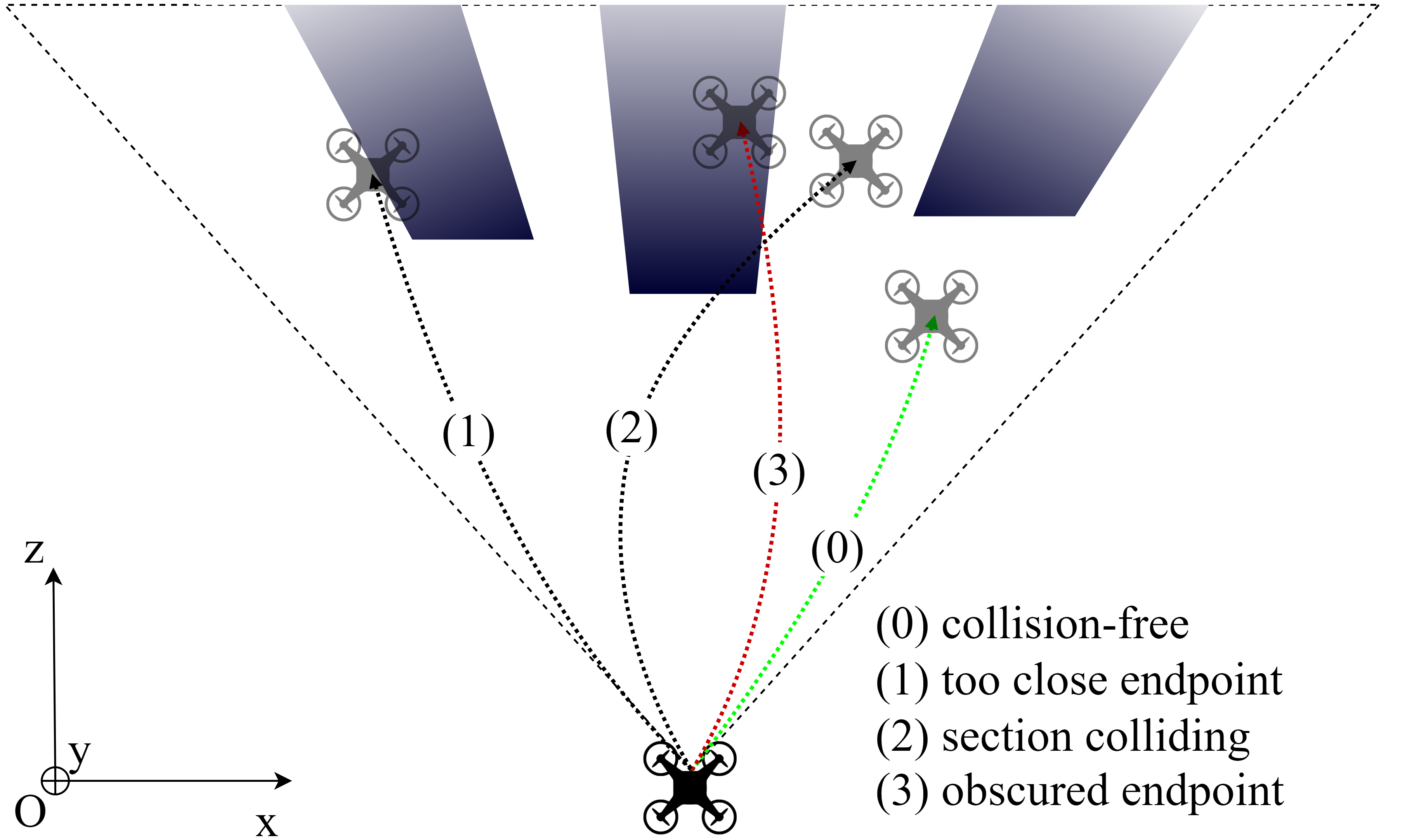}}
\caption{Three types of in-collision polynomial trajectories.}
\label{fig:collisions}
\end{figure}
\section{Depth-based Sampling Constraint for Local Planning}
\subsection{Constrained Sampling Space}
\label{sub_dbs}
For local planning, motion primitives can generally be sampled in input space or state space. When sampling in input space, a dynamic model will be utilised to propagate a trajectory, e.g., constant-acceleration point-mass as in \cite{Florence2020}. These propagated trajectories are easily dynamically feasible as dynamic constraints are already applied to the input space. However, uniform sampling in input space is not necessarily uniform in configuration space, where collision checking will be done. On the other hand, the sampling in state space can guarantee probabilistic optimality, but checking for dynamic feasibility has to be done afterwards \cite{Bucki2020}. Both two mentioned methods do not consider depth perception before collision checking. The denser the environment is, the more in-collision trajectories are in the sampled set, wasting much computational resource in checking many in-collision candidates. Hence, we propose to utilise available depth data to preprocess sampling space or sampled set for both mentioned methods in different pipelines shown in Fig. \ref{fig:sampling2}a, b. 
\par For sampling in input space as outlined in Fig. \ref{fig:sampling2}a, propagated trajectories' endpoints will then be projected into the latest image plane. We consider a coordinate system with the axis Ox and Oz parallel to the ground and Oy pointing downwards such that the quadrotor is at [0,0,0] and the Oxy plane is parallel to the image plane as sketched in Fig. \ref{fig:collisions}. The axis Oz is perpendicular to the Oxy and the image plane; thus, it points the depth direction. Note that we always have available depth matrix outputs directly from the depth sensor $\mathcal{D}_{w \times h}(\Re)$ where $w$, $h$ are the width and the height (in pixels) of the input depth image. We also have $(x,y)\ \forall\ x\in[0,w], y\in[0,h]$ are pixel coordinates of the endpoint's projection in the image plane and the associated depth value $d_{x,y}$ by querying the depth matrix, $d_{o}$ and $d_{p}$ is the distance from the sampled endpoint to the robot in the Oz axis of the local frame before and after constraining, respectively. We then compare $d_{x,y}$ with $d_{o}$: if $d_{o} > d_{x,y}$, the current endpoint is obscured, and the corresponding trajectory is discarded; otherwise, we accept the sampled endpoint and the corresponding trajectory.
\begin{figure}[htbp]
\centerline{\includegraphics[width=21pc]{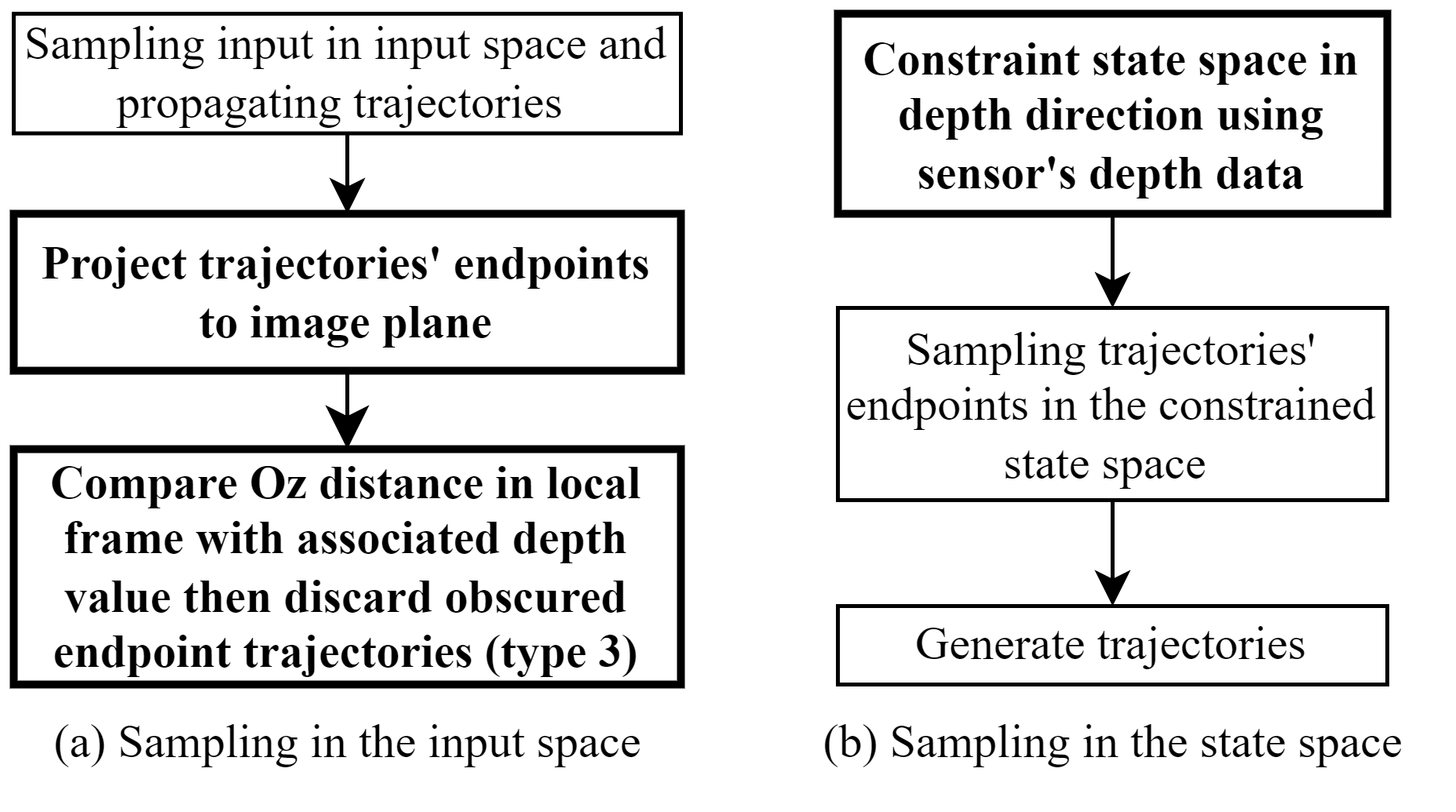}}
\caption{Sampling space preprocess pipelines for sampling in input space and state space.}
\label{fig:sampling2}
\end{figure}
\par For sampling in state space as represented in Fig. \ref{fig:sampling2}b, we propose a technique for excluding these type (3) trajectories as in Fig. \ref{fig:collisions}. Firstly, we decouple sampling the endpoints' position into depth direction and Oxy plane (image plane). We then adapt a constraint process to the depth sampling space (sample region). Specifically, we shrink the depth sample region (for each sampled pixel) to fit into an unoccupied region, pulling all the sampled endpoints into the free space. We directly utilise available raw data from a depth sensor to determine the unoccupied region for each sampled pixel $(x,y)$.
\begin{figure}[htp]
\centerline{\includegraphics[width=21pc]{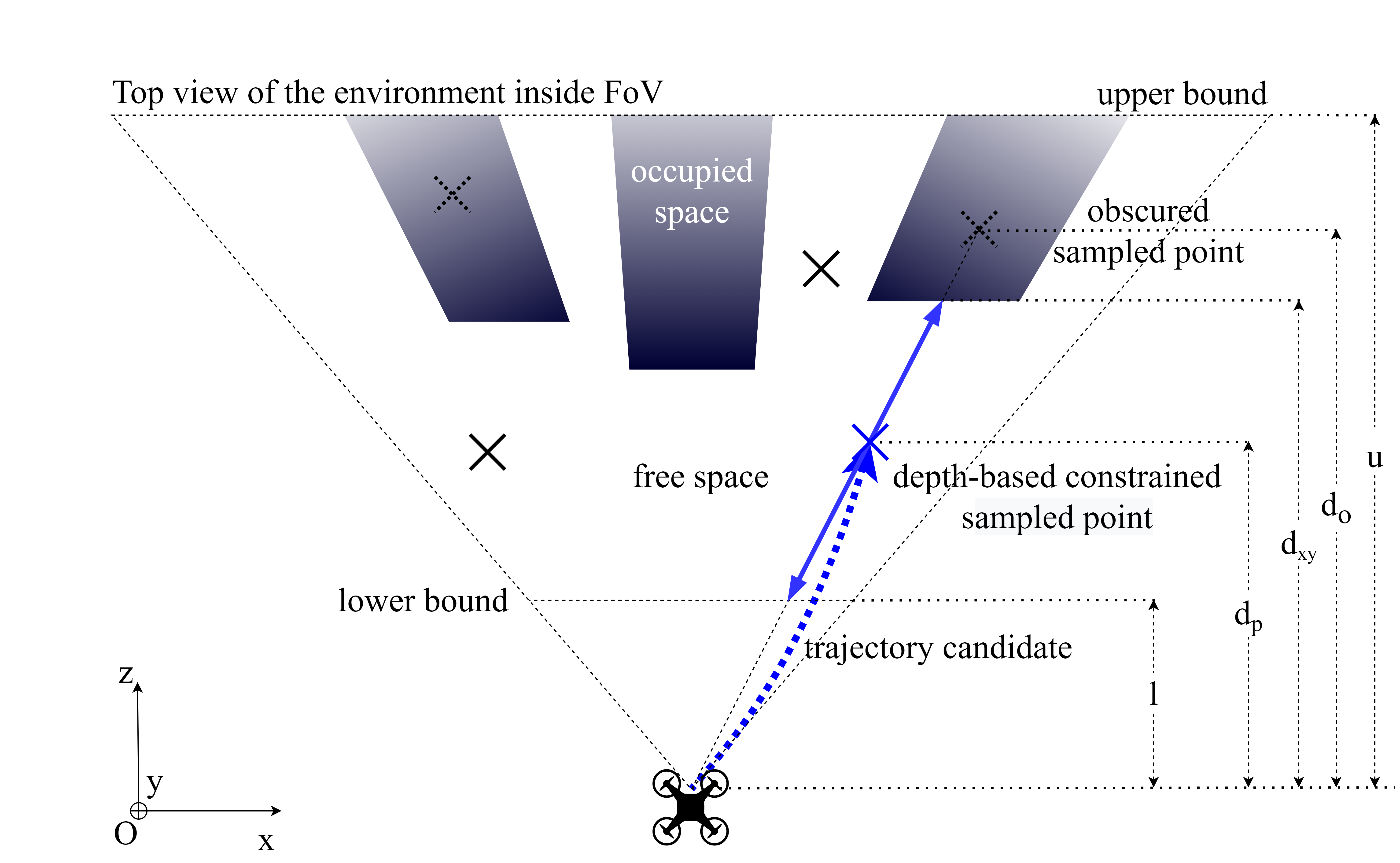}}
\caption{Depth-based sampling. Black crosses are initial sampled points; they can fall into free space or occupied space (gradient dark blue region) to become obscured sampled points (dash black crosses). After the constraining process, these obscured ones become depth-based constrained sampled points (solid blue crosses), and the depth sample region will be shrunk into a constrained depth sample region (solid blue segment).}
\label{fig:sampling}
\end{figure}
\par In sensor-based collision checking, we only sample in a reliable sensor range. Therefore, a predefined upper bound $u$ and lower bound $l$ will limit the sample region to the sensor's reliable range $[l,u]$. Within this work, we choose $u = 3m$ and $l = 1m$, suitable for most popular depth sensors on the market, such as Intel® RealSense™ D435. If the associated depth value of the sampled pixel $d_{x,y}$ falls inside the reliable range $[l,u]$ ($d_{x,y}\in[l,u]$), we scale down the initial depth sample region $[l,u]$ to the constrained depth sample region $[l,d_{x,y}]$ for each sampled pixel $(x,y)$. This manipulation will convert obscured sampled points into depth-based constrained sampled points, lying in the free space, as illustrated in Fig. \ref{fig:sampling}.
\par If the associated depth value exceeds the upper bound ($d_{x,y} > u$), in-range sampled points will always lie in the free space. If the associated depth is smaller than the lower bound ($d_{x,y} < l$), the generated trajectory will collide with obstacles. However, in this case, we keep the in-range sampled point and let the collision-checking process rejects this in-collision candidate. We do not need to constrain the depth sample region in these two cases: $d_{p} = d_{o}$. That means we are not directly rejecting any candidate in sampling but constraining the sampling space, maintaining its continuity and probabilistic optimality. The sampling space for trajectories' endpoints before and after constraining is shown:
\begin{equation}
\label{eqn:ori_space}
\begin{split}
\Omega_{o}=&\{(x,y,d_{o})\vert x\in[0,w], y\in[0,h], d_{o}\in [l,u]\};\\
\Omega_{p}=&\{(x,y,d_{p})\vert x\in[0,w], y\in[0,h],\\ &d_{p}\in [l,u]\setminus[d_{x,y},u]\};\\
\end{split}
\end{equation}
where $\Omega_{o}$, $\Omega_{p}$, $(x,y,d_{o})$, $(x,y,d_{p})$ are original space, constrained space, and their corresponding sample points, respectively.
\begin{equation}
\label{eqn:sampled_depth}
\begin{split}
    d_{p}= 
\begin{cases}
    (d_{o} - l) \frac{d_{x,y} - l}{u - l} + l,& \text{if } d_{x,y}\in[l,u]\\
    d_{o},                                      & \text{otherwise}
\end{cases}
\end{split}
\end{equation}
\par Details of the depth-based sampling method are described in Algorithm \ref{alg:dbs}.
\begin{algorithm}
    \caption{Depth-based sampling}
    \label{alg:dbs}
    \begin{flushleft}
        \textbf{Input:} Latest depth image $\mathcal{D}$, user-defined range: lower bound $l$, upper bound $u$\\
        \textbf{Output:} Sampled point $P$ as the endpoint's position of the next trajectory for evaluating
    \end{flushleft}
    \begin{algorithmic}[1]
    \Function {depth\_based\_sample()}{}
        \State {Randomly sample a pixel $(x,y)$ from image frame;}
        \State {Randomly sample $d_{o}$ $\in$ [$l$,$u$];}
        \State {Query $d_{x,y}$ value in $\mathcal{D}$;}
        \If {$d_{x,y}\in[l,u]$}
            \State {$d_{p} = (d_{o} - l) \frac{d_{x,y} - l}{u - l} + l$;}
        \Else 
            \State {$d_{p} = d_{o}$;}
        \EndIf
        \State{Deproject current candidate ($x$,$y$,$d_{p}$) to a real-world endpoint $P$;}
    \EndFunction
    \end{algorithmic}
\end{algorithm}  
\subsection{Planning Algorithm}
\label{sub_planning}
\par We propose the path planning algorithm described in Fig. \ref{fig:planning} in this work. The local planner aims to find the lowest-cost collision-free trajectory over a given user-defined computation time. The planner will run many sequences until the time limit is reached. For each sequence, we generate a minimum-average-jerk polynomial trajectory \cite{Mueller2015} using our depth-based sampling method described in \ref{sub_dbs}. The collision checking function follows the algorithm RAPPIDS in \cite{Bucki2020}. This is the most computationally demanding step and is only called when the sampled trajectory's cost is lower than the current best cost. This way, the best trajectory's cost converges to the lowest over time.
\begin{figure}[htp]
\begin{center}
    \includegraphics[width=21pc]{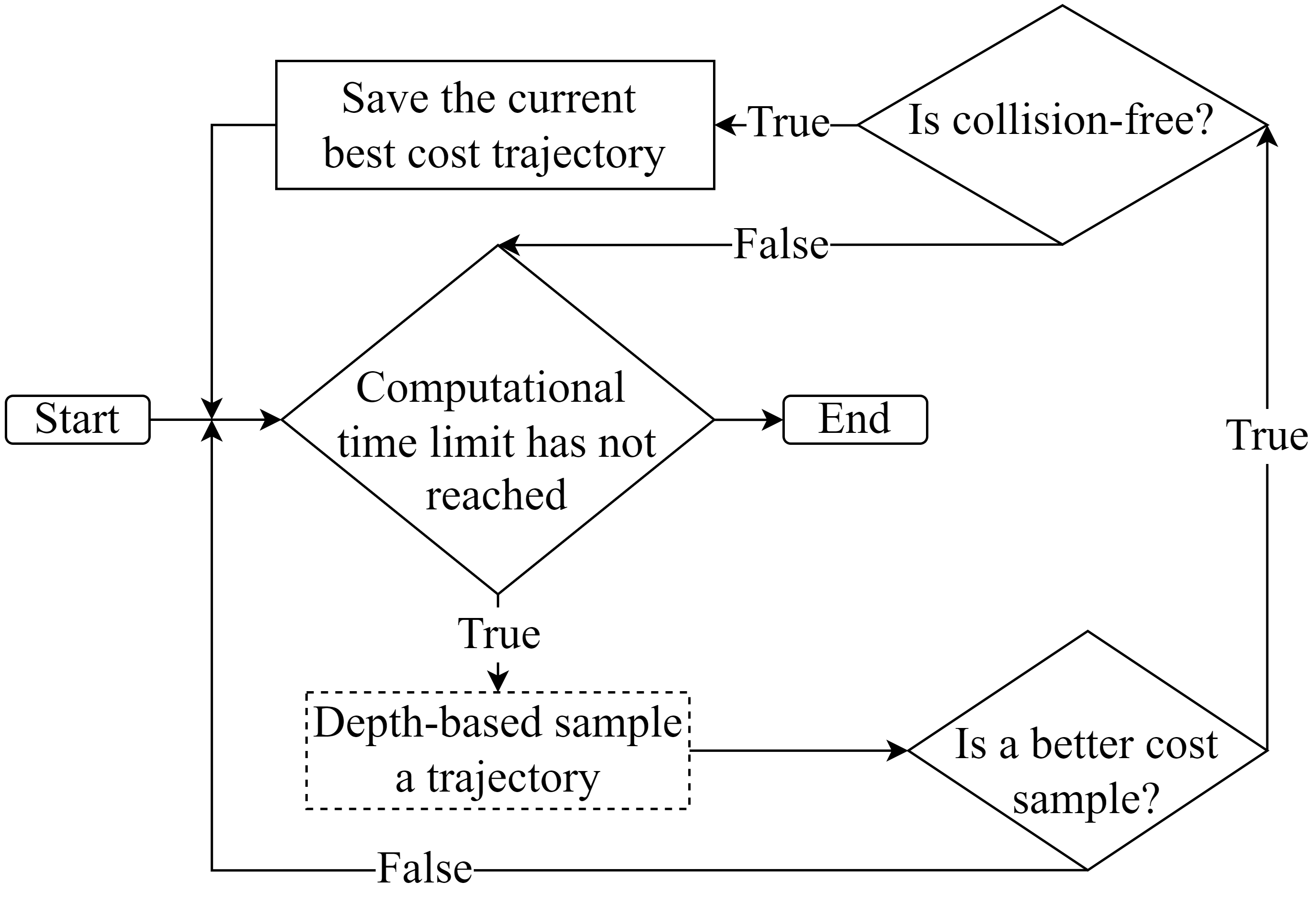}
\end{center}
\caption{Planning pipeline.}
\label{fig:planning}
\end{figure}
\par We propose a direction function $\mathcal{J}(p)$ for evaluating cost trajectories as follows:
\begin{equation}
\label{eqn:cost_function}
\begin{split}
O &= p(0); P = p(T)\\
\overrightarrow{d} &= \frac{\overrightarrow{OG}}{\|\overrightarrow{OG}\|}\\
\mathcal{J}(p) &= \frac{\overrightarrow{d}.\overrightarrow{PO}}{\|\overrightarrow{PO}\|}
\end{split}
\end{equation}
where $p$ is the polynomial trajectory to be evaluated; $O$, $G$ and $P$ are the world-frame positions of the vehicle, the goal and the trajectory's endpoint, respectively; $T$ is the execution time of the trajectory. $\overrightarrow{d}$ is the vector of exploration direction. This vector-based function corrects the velocity vector to the desired direction. The works \cite{Lee2021} and \cite{Bucki2020} employ a function prioritising the average velocity to the goal, allowing the lateral motion to be free. Thus it causes a more diverged and fluctuated path, even in completely free space. In our proposed system, $\overrightarrow{d}$ will be specified as the unit vector from the vehicle's current position to the goal. The cost is simply the dot product of the exploration vector $\overrightarrow{d}$ and the negative unit vector of the motion primitive. That is, the direction of a better trajectory will align closer to the goal direction. This feature also paves a platform for the autonomous steering algorithm, which we will discuss later in section \ref{sec_steer}.
\subsection{Algorithm Evaluation}
\par We compared the planner's performance in terms of the cost of the best candidate found. We applied multiple Monte Carlo simulations\footnote{An implementation of the algorithm and the simulation source code will be available online at \url{https://github.com/thethaibinh/dess}} as described in \cite{Bucki2020} on the two approaches: uniform sampling and depth-based sampling. The simulations were based on 1000 uniformly synthetic obstacle scenarios and robot states. We ran the two mentioned planners 1000 times for each of the 15 allocated computation time values, ranging from 0 to 20 milliseconds. We repeated the same simulation setup on two hardware platforms: an i7-1185G7 laptop and a Jetson Nano embedded computer. We divided computation time to have higher resolution at shorter computation time regions. The results depicted in Fig. \ref{fig:best_cost.} show that the depth-based sampling planner finds a considerably lower cost trajectory than the uniform sampling one, especially in short computation time regions or a less powerful platform.
Since we constrained depth sample regions, it is a higher chance for us to sample a collision-free candidate. Therefore, there is a higher ratio of collision-free candidates in the sampled set, the computation power spent on collision checking decreases, resulting in more sampled trajectories being evaluated as in Fig. \ref{fig:traj_gen}. Therefore, the planner then again has spare time to sample more meaningful trajectories in the space, which include more potential collision-free trajectories as in Fig. \ref{fig:num_collision-free.}.
\begin{figure}[htp]
\centerline{\includegraphics[width=21pc]{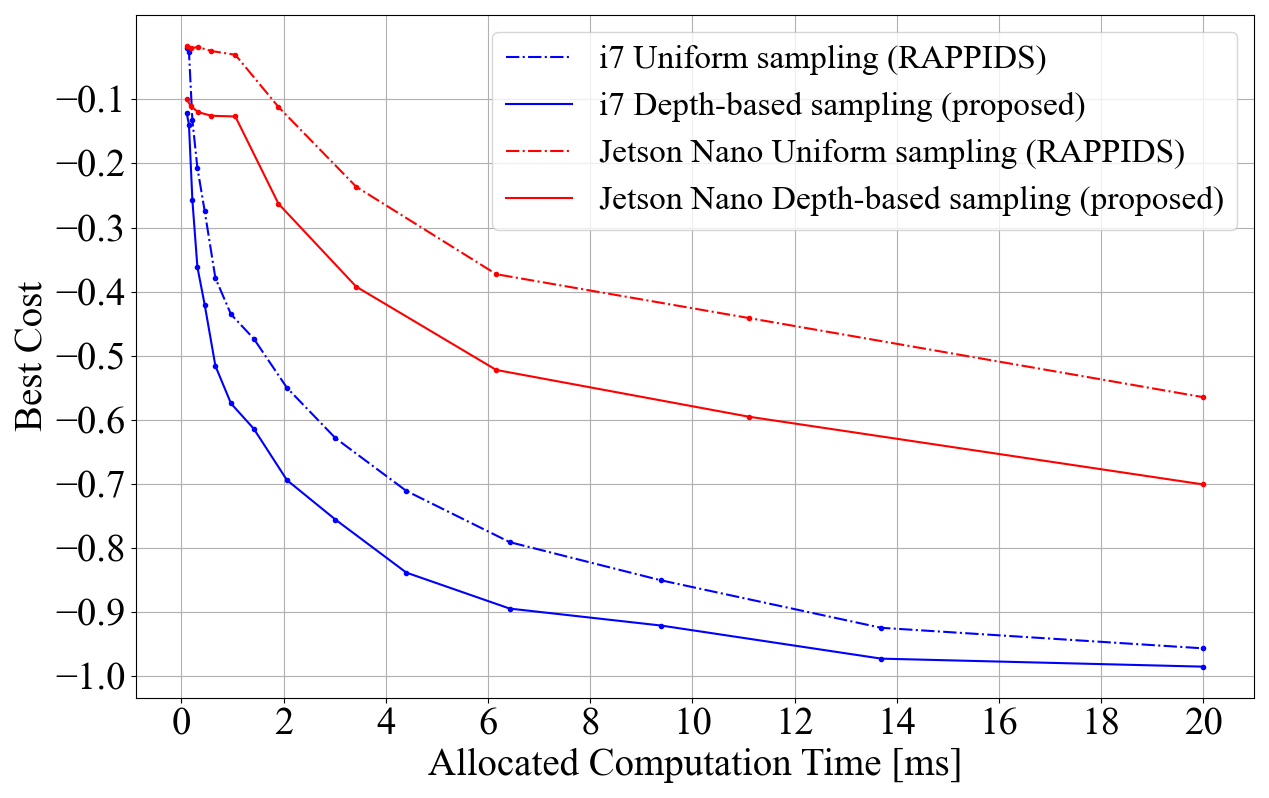}}
\caption{Best cost values for proposed depth-based technique and uniform sampling used in RAPPIDS \cite{Bucki2020}.}
\label{fig:best_cost.}
\end{figure}
\begin{figure}[htp]
\centerline{\includegraphics[width=21pc]{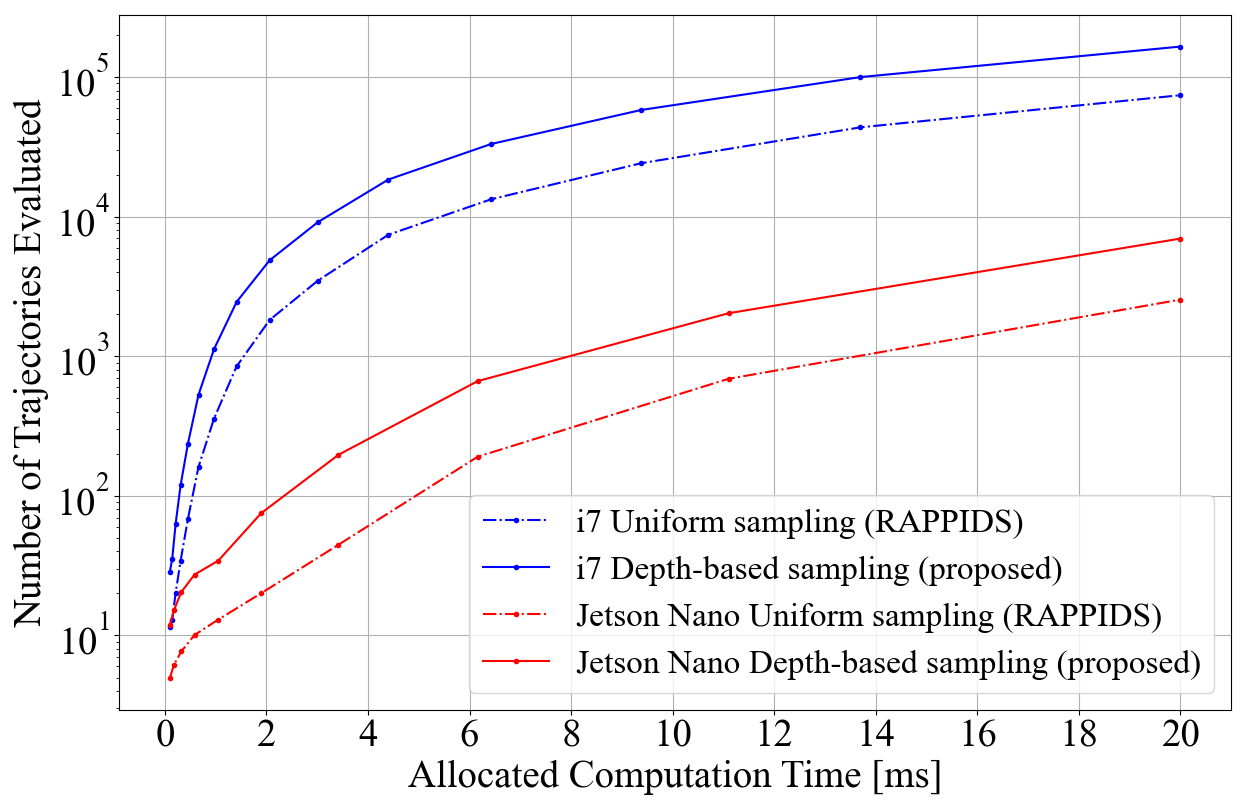}}
\caption{Comparing number of evaluated trajectories for proposed depth-based technique and uniform sampling used in RAPPIDS \cite{Bucki2020}.}
\label{fig:traj_gen}
\end{figure}
\begin{figure}[htp]
\centerline{\includegraphics[width=21pc]{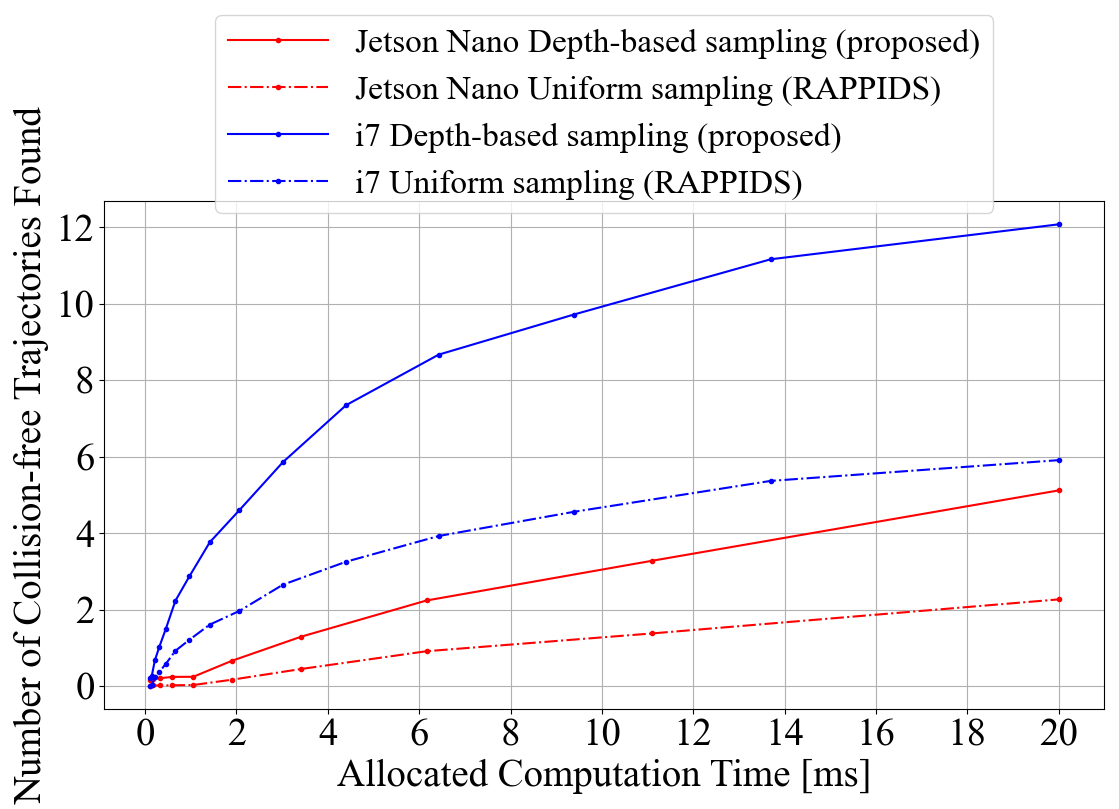}}
\caption{Comparing number of generated collision-free trajectories for proposed depth-based technique and uniform sampling used in RAPPIDS \cite{Bucki2020}.}
\label{fig:num_collision-free.}
\end{figure}
\subsection{Performance Analysis}
We evaluate the sampling methods' efficiency by doing a time complexity analysis. Given a depth image containing $n$ pixels and a computation time, accessing a value from a depth image can be done in constant time. The percentage of obscured-endpoint candidates in the uniformly sampled set and the depth-based sampled set is $RD_{aic}$ and $DB_{aic}$ respectively, and the average number of elements $N_{avg}$ in sampled sets in both cases are equal. Checking an obscured endpoint trajectory requires the planner to inflate a new pyramid, which also cost an $O(n)$ operation. The depth-based technique excluded all obscured-endpoint candidates, thus $DB_{aic}$ is zero. Since random generators are uniform distributions, $RD_{aic}$ roughly equals the percentage of occupied volume out of the total sample range volume (in the FoV) as illustrated in Fig. \ref{fig:sampling}. In other words, we can call it the level of density $L_{dens}$, thus $RD_{aic}$ = $L_{dens}$. For both uniform and depth-based pipelines, we assume that the ratio $K_{c}$ of the number of candidates that pass the better-cost checking is equal, e.g., 100 trajectories pass the cost checking to come for collision checking out of 1000 sampled trajectories, following the planning pipeline sketched in Fig \ref{fig:planning}. Therefore, the depth-based pipeline will save us an operation of $O(L_{dens}N_{avg}K_{c}n)$ for each sequence. In free space, uniform and depth-based sampling would perform equally because $L_{dens}=0$. But the depth-based method would spare us a substantial amount of computational power in densely occupied structures where $L_{dens}$ is considerable.
\section{Depth-based Steering for Continuous Planning}
\label{sec_steer}
\subsection{Trajectory Generation for Stuck Motions}
\label{subsec_stuck}
\par We propose a steering command for the local planner to autonomously generate feasible trajectories toward a goal. The procedure of generating a normal trajectory follows the pipeline described in subsection \ref{sub_planning}. We request steering commands when the local planner cannot find any feasible conventional trajectory for one second (vehicle to rest at the endpoint of the last executed trajectory). The steering command is a motion primitive with the position as the latest executed trajectory's endpoint and an additional yaw orientation. New depth data will come due to changes in the FoV as the quadrotor executes steering commands. The planner will consistently send steering commands until it finds at least a feasible trajectory from this latest new depth frame. This trajectory generation scheme is summarised in Algorithm \ref{alg:steering}. 
\begin{algorithm} 
    \caption{Steering command generation pipeline}
    \label{alg:steering}
    \begin{algorithmic}[1]
        \While{not at goal}
            \State {generate normal trajectory;}
            \If {cannot generate any feasible normal trajectory for more than one second}
                \State {generate and send steering commands to the quadrotor;}
            \Else 
                \State {send the best-direction trajectory found to the quadrotor;}
            \EndIf
        \EndWhile
    \end{algorithmic}
\end{algorithm}
\par We propose a depth-based mechanism for generating these steering commands. When the vehicle is stuck, we scan the latest depth image to locate the nearest point in the space, which means the closest obstacle. We divide space in half by a vertical plane that passes through the vertical centreline of the image. On which half-space the nearest point lies, the quadrotor steers toward the other half as visualised in Fig. \ref{fig:steering1}. The desired yaw angle will be accumulated by steering value over time on the controller side and calculated as follows,
\begin{equation}
    \gamma= 
\begin{cases}
    1,  & \text{if } x_{c} < (w / 2)\\
    -1, & \text{otherwise}
\end{cases}
\label{eqn:steering}
\end{equation}
\begin{equation}
\label{eqn:yaw_des}
    \psi_{r} = \psi_{f} + \gamma * K_{p} * \Delta t
\end{equation}
\par where:
\begin{itemize}
    \item $\gamma$ is the sign value representing steering direction, $\gamma=1$ means steering right, $\gamma=-1$ means steering left.
    \item $x_{c}$ is the pixel location of the closest point along the x-axis of the image frame.
    \item $w$ is the width of the image frame in pixel unit.
    \item $\psi_{r}$ is the desired reference yaw angle for yaw control when executing a steering command.
    \item $\psi_{f}$ is the feedback of the current yaw angle.
    \item $K_{p}$ is the accumulation gain for the yaw angle.
    \item $\Delta t$ is the cycle time of a control loop.
\end{itemize}
\begin{figure}[htp]
\centerline{\includegraphics[width=21pc]{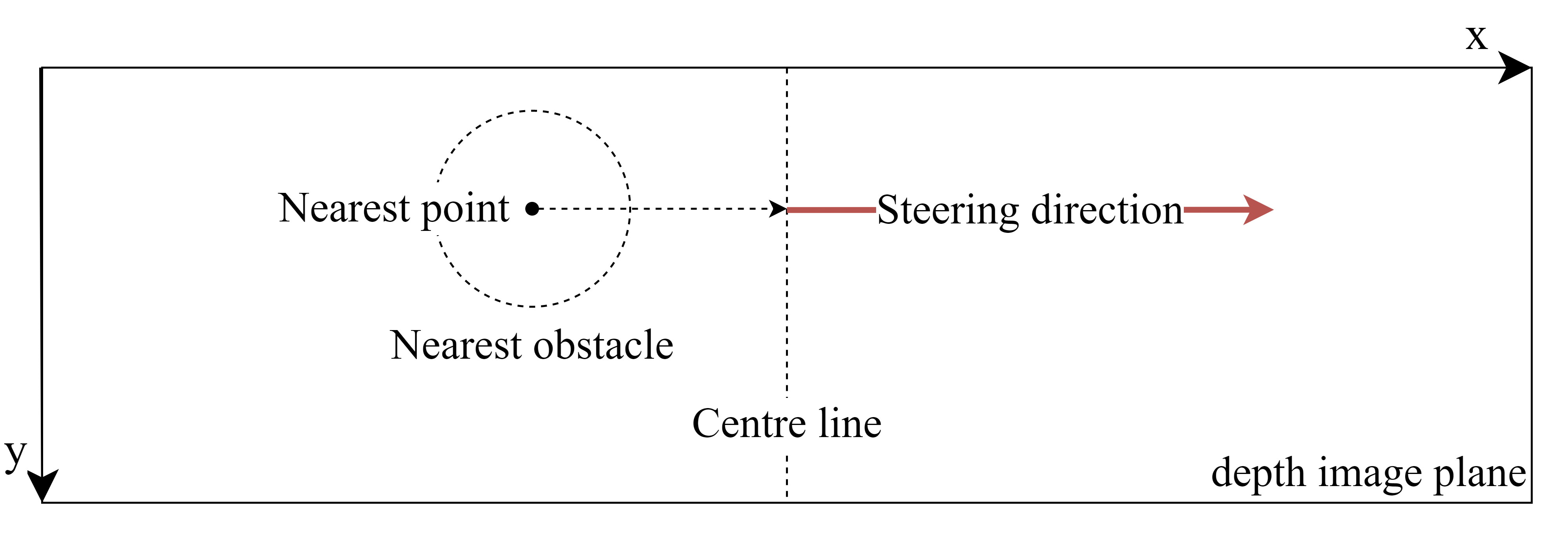}}
\caption{Steering value calculation.}
\label{fig:steering1}
\end{figure}
The mechanism is simple yet effective with the type of geometry-based planners since it is hungry for new input. When a sufficiently clear space view comes, the planner will generate feasible trajectories again, pulling the quadrotor out of infeasible configurations, especially when facing a massive obstacle that is even bigger than the FoV.
We can prove by geometry that it is feasible to navigate the robot over large convex obstacles.
\subsection{Yaw Control in Obstacle-dense Areas}
When it comes to travelling long distances over many obstacles, we need another mechanism to integrate the best direction trajectory and depth-based steering to control. Instead of applying a fixed yaw control that faces the camera toward the goal \cite{Lee2021}, we adopt a yaw control scheme that faces the camera to the local goal (the latest feasible trajectory's endpoint) by setting the desired control yaw angle $\psi_{d}$ equal to the bearing of the vehicle-goal vector $\psi_{b}$. We only do that when the position error vector's length $\|\overrightarrow{r_{err}}\|$ is greater than one meter. Otherwise, we set $\psi_{d}$ equal to either $\psi_{r}$ when the robot needs to do a steering command or the latest executed control yaw value $\psi_{d}^-$ when it does not (Algorithm \ref{alg:continuous}). In this way, the system automatically prioritizes the exploration direction that navigates toward the free space, steering away from an obstacle before bumping into it. As a result, the quadrotor would autonomously find a clearer path, escaping cluttered areas faster and safer, avoiding more potential "stuck" situations (need to steer).
\begin{algorithm}
\label{alg:continuous}
\caption{Yaw control}
    \begin{algorithmic}[1]
        \If{$\|\overrightarrow{r_{err}}\|>1m$}
            \State{$\psi_{d}=\psi_{b}$\;}
        \Else
            \If{is steering}
                \State{$\psi_{d}=\psi_{r}$;}
            \Else
                \State{$\psi_{d}=\psi_{d}^-$;}
            \EndIf
        \EndIf
        \State{$\psi_{d}^-=\psi_{d}$;}
    \end{algorithmic}
\end{algorithm}
\subsection{Completeness}
\label{sub_completeness}
\begin{figure*}
  \includegraphics[width=\textwidth]{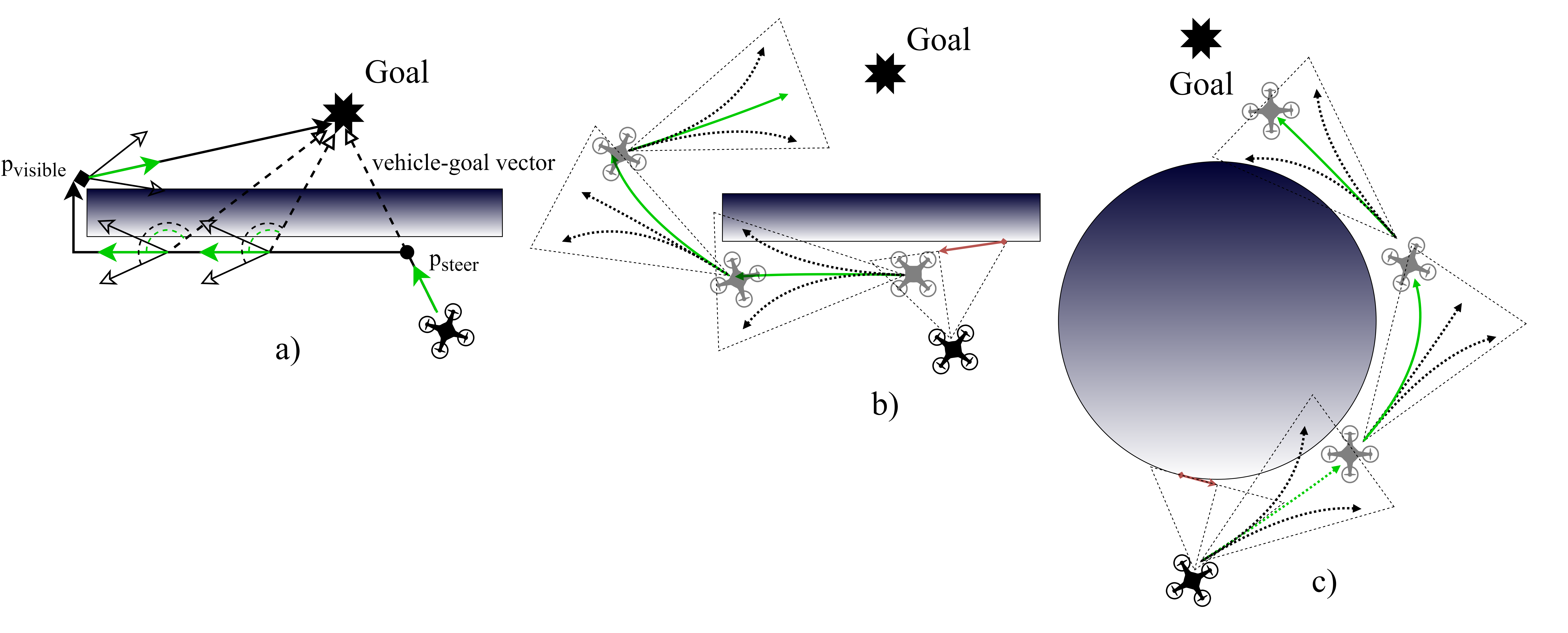}
  \caption{Steering through obstacles scenarios, including the wall and the spheroid. When the vehicle is far away from any obstacle, it will perfectly track the vehicle-goal line to hit the steering point $p_{steer}$ on the boundary of obstacles where we need to steer for a more open field of view. This phase is illustrated by a solid green arrow connecting the robot's current position to the steering point $p_{steer}$. After that steering, the next consecutive best-cost trajectories (solid green arrows) would form a line parallel to the wall's peripheral until they end up at a visible point $p_{visible}$ where the goal is visible to the vehicle. As formulated in \ref{eqn:cost_function}, best-cost candidates are collision-free and form the smallest angle with vehicle-goal vector (dash green angle). The entire maneuver is visualized in sub-figure a). The drone needs time to change its heading to the desired angle. Thus, the practical trajectory will be like the one connected by best-direction trajectories, which is illustrated by curved solid arrows in sub-figure b) and c). The remaining black dashed line arrows are rejected candidates in the sampled set. The solid red arrows denote the steering direction, following the steering mechanism proposed in \ref{subsec_stuck}. From the analysis of sub-figure a), we can also infer that when the vehicle is close to the obstacle, the best path will align and snap to the obstacle's peripheral as shown in sub-figure b) and c)).
  }
  \label{fig:steering}
\end{figure*}
Here we follow up on our previous discussions and formally prove the completeness of our proposed algorithm. In this work, we only consider convex obstacles.
\par We assume there is no uncertainty in the control process, and the planning is locally optimal. That is, the robot can flawlessly track the given path, and the local planner always finds the best-cost trajectory, which is closest to the goal direction and collision-free.
\par On the other hand, we define the vehicle-goal line as the unique line that passes through the robot's current position and goal point. The robot may need to execute a steering command (we now call the act of executing a steering command to \textit{steer} for brevity) only when obstacles lie in the vehicle-goal line.
\begin{lemma}
When the vehicle faces a convex obstacle, it will reach a visible point $p_{visible}$, where the goal is visible in the FoV, after only the first steer.
\label{lem}
\end{lemma}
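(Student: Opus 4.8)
The plan is to argue purely geometrically, using the convexity of the obstacle $\mathcal{O}$ together with the two structural facts already in place: the steering rule from \ref{subsec_stuck} (steer away from the nearest obstacle point) and the direction cost $\mathcal{J}$ in \eqref{eqn:cost_function}, whose minimiser is the collision-free trajectory forming the smallest angle with the vehicle--goal vector $\overrightarrow{d}$. I would first pin down visibility. Fixing the goal $G$ outside $\mathcal{O}$, draw the two supporting (tangent) lines from $G$ to $\mathcal{O}$, touching it at $t_1$ and $t_2$. The occluded region, the \emph{shadow}, is the set of positions $X$ for which the segment $XG$ meets $\mathcal{O}$; by convexity this shadow is exactly the region behind $\mathcal{O}$ bounded by the two tangent rays emanating from $t_1$ and $t_2$ away from $G$. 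A position is a visible point $p_{visible}$ precisely when it lies on or beyond one of these bounding rays, since then the line of sight to $G$ grazes but never crosses $\mathcal{O}$. This reduces the lemma to showing that after one steer the vehicle exits the shadow through one of its two bounding rays.

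Next I would analyse the first steer. The vehicle approaches along the vehicle--goal line and, by hypothesis, the obstacle lies on that line, so it halts at the steering point $p_{steer}$, the first intersection of the line with $\partial\mathcal{O}$. The steering rule must pick the correct side: I would show that the nearest obstacle point lies on the same image half as the closer of $t_1,t_2$, so steering toward the opposite half rotates the FoV toward that nearer tangent point. Convexity then guarantees that the rotation uncovers the free space flanking $\mathcal{O}$, so feasible trajectories reappear and the steering phase terminates.

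The crux is to show that after this single steer no further steering is ever triggered before $p_{visible}$ is reached. Once feasible trajectories exist again, the minimiser of $\mathcal{J}$ is the collision-free direction closest to $\overrightarrow{d}$; since the direct direction is blocked, this optimum must graze $\partial\mathcal{O}$, so the executed path hugs the boundary and each successive best-cost segment runs parallel to the obstacle's peripheral, as depicted in Fig.~\ref{fig:steering}. I would then induct along this boundary-hugging path: at every intermediate position convexity keeps a boundary-tangent, collision-free direction inside the FoV, so the planner never accumulates a full second of infeasibility and never re-arms the steering condition. The natural progress parameter is the angle subtended at $G$ (equivalently the arclength along $\partial\mathcal{O}$ toward the tangent point); the best-cost direction advances it monotonically until the vehicle crosses the bounding tangent ray, at which instant the segment to $G$ no longer meets $\mathcal{O}$ and the vehicle stands at $p_{visible}$, giving finite termination.

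The main obstacle is precisely this no-second-steer claim: I must exclude a configuration in which, part-way around the convex boundary, the FoV momentarily holds no feasible trajectory and the one-second timeout re-fires. The leverage is convexity, which forces the local tangent direction to vary monotonically and to point into free space ahead of the vehicle, so a boundary-parallel feasible primitive persists throughout the traverse. Making this ``persistence of a grazing feasible direction'' quantitative --- in terms of the FoV half-angle and the reliable sensor range $[l,u]$ --- is where the real work lies, and it is also exactly where the restriction to convex obstacles becomes indispensable.
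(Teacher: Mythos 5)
Your proposal follows the same geometric skeleton as the paper's proof --- track the vehicle--goal line to the steering point $p_{steer}$, execute one steer, then let the best-cost trajectories under the direction function (\ref{eqn:cost_function}) hug the obstacle's peripheral until a visible point $p_{visible}$ is reached --- but it packages the argument genuinely differently, and more carefully. The paper first claims that ``all convex obstacles can be represented by the wall and spheroid shapes'' and then argues each canonical case by reading off Fig.~\ref{fig:steering}; that reduction is never justified and the case analysis is essentially pictorial. You instead work with an arbitrary convex body directly: visibility is characterized via the two supporting tangents from the goal and the resulting shadow region, so reaching $p_{visible}$ becomes the precise event of crossing a bounding tangent ray, and progress is measured by a monotone parameter (angle at $G$ or boundary arclength). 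This buys generality (no unproved reduction to two shapes) and a clean termination criterion. You also explicitly isolate the step the paper silently assumes: that after the single steer, a boundary-grazing feasible trajectory persists at every intermediate position, so the one-second timeout never re-fires and no second steer occurs. The paper's proof asserts this (``subsequent consecutive best-cost trajectories would form a line parallel to the wall's peripheral'') without argument; your plan names it as the crux and correctly observes that closing it requires a quantitative statement involving the FoV and the sensor range $[l,u]$. So your proposal is not weaker than the published proof --- it proves the same chain of claims, in greater generality, and honestly flags the one gap that the paper's own proof also leaves open but does not acknowledge. The minor claim you should not lean on is that the nearest obstacle point projects to the same image half as the closer tangent point; for asymmetric convex bodies this need not hold, but it is also inessential --- for a convex obstacle, steering to either side eventually exits the shadow, so the lemma's conclusion survives even if the steering rule picks the ``longer'' side.
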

\begin{proof}
All convex obstacles can be represented by the wall and spheroid shapes as illustrated in Fig. \ref{fig:steering}. We consider scenarios when a given goal lying on the other side of the obstacle with the robot. When the vehicle is away from a wall, it will perfectly track the vehicle-goal line to hit the steering point $p_{steer}$ on the wall's boundary where we need to steer for a more open field of view. After that steering, the first feasible trajectory's direction will be parallel to the current side of the wall. And the subsequent consecutive best-cost trajectories would form a line parallel to the wall's peripheral until they end up at a visible point $p_{visible}$ where the goal is visible to the vehicle (as shown in Fig. \ref{fig:steering}a)). Indeed, these trajectories have the best direction costs as they are the minus dot product of two unit vectors of the vehicle-goal line and trajectory's direction as formulated in \ref{eqn:cost_function}. Spheroids and other convex obstacle scenarios can be proved by following the same geometrical approach as visualised in Fig. \ref{fig:steering}b,c. Thus, when the vehicle faces a convex obstacle, it will reach a visible point $p_{visible}$ after only the first steer.
\end{proof}
\begin{theorem}
The steering algorithm is complete in all scenarios of convex obstacles.
\end{theorem}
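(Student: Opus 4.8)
The plan is to prove completeness by reduction to Lemma~\ref{lem}, combining it with a finiteness argument over the collection of obstacles that can ever obstruct the vehicle-goal line. Recall that completeness here means: given any start configuration and any goal $G$ reachable in the free space, the algorithm produces a collision-free path that brings the vehicle to $G$ in finite time (equivalently, in finitely many steering episodes). Under the stated idealizations (perfect tracking, locally optimal planning), the motion decomposes into two alternating regimes governed by Algorithm~\ref{alg:continuous} and Algorithm~\ref{alg:steering}: a \emph{tracking regime}, in which no obstacle lies on the vehicle-goal line and the best-direction trajectories drive the vehicle straight toward $G$, and a \emph{steering regime}, triggered precisely when an obstacle obstructs that line.

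First I would formalize the dichotomy. If no convex obstacle intersects the open segment from the current position $O$ to $G$, then by the cost function~\eqref{eqn:cost_function} the best-cost collision-free trajectory aligns with $\overrightarrow{OG}$, so the vehicle advances monotonically toward the goal and, absent obstruction, reaches it. The only way progress halts is the stuck condition of Algorithm~\ref{alg:steering}, which by the preceding analysis occurs exactly when a convex obstacle sits on the vehicle-goal line. Lemma~\ref{lem} then supplies the crucial guarantee: a single steer suffices to bring the vehicle to a visible point $p_{visible}$ of that obstacle, from which $G$ is visible in the FoV. I would argue that from $p_{visible}$ the vehicle re-enters the tracking regime with respect to $G$, having cleared that particular obstacle, so each convex obstacle costs the algorithm at most one steering episode.

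The second step is the finiteness/termination argument, and this is where I expect the main obstacle to lie. Lemma~\ref{lem} handles a single convex obstacle in isolation, but the theorem asserts completeness \emph{in all scenarios of convex obstacles}, which I read as allowing several disjoint convex obstacles between start and goal. I would need to show that (i) only finitely many obstacles can ever lie on the successively updated vehicle-goal lines, and (ii) each obstacle is encountered at most once, i.e. the vehicle never returns to obstruct itself against an already-cleared obstacle. For (ii), I would use the fact that after reaching $p_{visible}$ the goal is visible and the tracking regime resumes along a line that no longer passes through the just-cleared obstacle (convexity and the visibility of $G$ together rule out re-obstruction by the same body). For (i), I would invoke a monotone progress measure — for instance, that the straight-line distance to $G$ strictly decreases across each tracking-plus-steer cycle, or that each cleared obstacle is permanently removed from the set of possible obstructors — to bound the number of episodes by the (finite) number of obstacles on the geodesic corridor to $G$.

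Finally I would assemble these pieces into an induction on the number of obstructing obstacles: the base case is an unobstructed line, handled by pure tracking; the inductive step peels off the first obstacle via Lemma~\ref{lem}, reducing to a strictly smaller configuration. The delicate point to state carefully is the convexity hypothesis doing double duty — it is what makes ``parallel to the peripheral until the goal becomes visible'' valid in Lemma~\ref{lem}, and it is also what prevents the parallel-tracking phase from snagging on concavities and thereby guarantees the no-re-obstruction property in step~(ii). I would flag that concave obstacles break both, which is precisely why the theorem is scoped to the convex case, and I would be explicit that the ``no uncertainty / locally optimal'' assumption is what licenses treating the practical curved headings of Fig.~\ref{fig:steering}b,c as faithful realizations of the idealized parallel-tracking path.
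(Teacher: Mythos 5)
Your proposal is correct in substance and rests on the same cornerstone as the paper --- Lemma~\ref{lem}'s guarantee that a single steer suffices to make the goal visible --- but the logical route is genuinely different. The paper proves the theorem by contradiction: if the algorithm were incomplete, the vehicle would either travel an infinite path (dismissed as diverging or cycling in free space, said to conflict with the perfect-tracking and local-optimality assumptions) or steer infinitely often (dismissed by Lemma~\ref{lem}, since one steer already renders the goal visible). You instead give a direct termination argument: you decompose the motion into a tracking regime and a steering regime, charge each steering episode to one obstructing convex obstacle via Lemma~\ref{lem}, and conclude by induction on the number of obstructors with a monotone progress measure. What each buys: the paper's contradiction is compact and disposes of both failure modes in one stroke, but it compresses the entire multi-obstacle question into the unexamined assertion that an infinite path ``conflicts with our previous assumptions''; your induction is more informative (it yields an explicit bound of at most one steer per obstacle, hence finitely many episodes for finitely many obstacles) and is diagnostically sharper, because you are the one who names the real outstanding steps --- the no-re-obstruction property (that steering around a later obstacle cannot indefinitely place an already-cleared body back on the vehicle-goal line) and the finiteness of encountered obstructors. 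Note, however, that these steps (i) and (ii) you flag are precisely what the paper's own proof also leaves unproven, hidden inside its appeal to the standing assumptions; so your route is no weaker than the published one, but to be a finished proof it must actually supply the progress-measure and no-re-obstruction arguments rather than only announce them.
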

\begin{proof}
Let us prove the theorem by contradiction. That is, we assume that the steering algorithm is incomplete. Consider situations when the robot faces a wall and a large spheroid on the way to the goal. If the steering algorithm does not find the path, then necessarily it will follow an infinite path or have infinite times of steer to get to the goal. Suppose the vehicle has to travel an infinite path to get to the goal. The path will either diverge from the goal or be a local cyclic course in free space. This conflicts with our previous assumptions. On the other hand, if the vehicle has to steer forever, the goal would never be visible to the robot. However, Lemma \ref{lem} shows that when facing a large wall or spheroid, the robot will take a maximum of one steer to have the goal visible in the FoV, thus the goal cannot be obscured forever. Therefore, the proposed algorithm always be able to find a feasible path to the goal.
\end{proof}
\section{Implementations}
\subsection{Setup}
\begin{figure*}[htp]
\includegraphics[width=\textwidth]{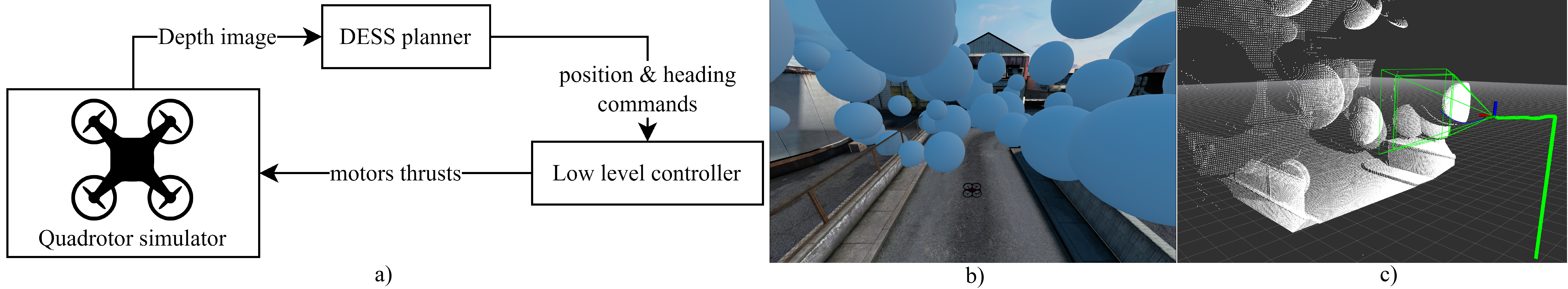}
\caption{Planning system overview based on quadrotor simulator from DodgeDrone Challenge 2022. The mission is to navigate a simulated quadrotor to fly through obstacle-dense environments. The obstacles are different in size and randomly located in the space.}
\label{fig:dodge_drone}
\end{figure*}
We compare planning performance between two policies: the fixed yawing obstacle avoidance method in \cite{Lee2021} and our proposed planning system, which we now call \textbf{DESS} - \textbf{DE}pth-based \textbf{S}ampling and \textbf{S}teering for brevity. The fixed yawing policy utilises a utility function of average velocity to the goal, and a collision-checking algorithm with uniform sampling \cite{Bucki2020}, while DESS manipulates a direction function, collision checking with depth-based sampling and an integrated steering mechanism for stuck motions. We evaluate their success rate in navigating the quadrotor to reach a goal.
\par We conduct experiments for that flight task on a simulation system\footnote{Our simulation system source code is available online at \url{https://github.com/thethaibinh/agile_flight}} of a quadrotor flying through an obstacle-dense area from DodgeDrone Challenge 2022 \cite{DodgeDrone}. The challenge is to navigate a quadrotor through the obstacle-dense area to a 65m far ahead goal. For the convenience of bulk autotests, we decrease the goal distance to 15m. The obstacle-dense area is a 3D bounding box with dimensions of [0, 15, -5, 5, 0, 10] (meters) in a coordinate system with the axis Ox and Oy parallel to the ground and Oz pointing upwards, in which the quadrotor is at [0, 0, 0] and the goal is at [17, 0, 5]. Obstacles are solid spheres of different diameters ranging from 0.1 to 4.0 meters, stationary, and randomly generated inside the bounding box. We generate three obstacle-dense scenarios based on predefined density levels: easy, medium and hard so that there are 29, 51 and 67 obstacles in the bounding box for each scenario, respectively. Fig. \ref{fig:dodge_drone}b) and c) capture a typical scenario with additional visualisation of point cloud ground truth. All obstacles of the easy scenario are already included in the medium scenario, and the hard scenario has all of the medium obstacles in its configuration. A trial will be labelled a success when the robot reaches the goal's position before the timeout and does not collide with any obstacle. We integrated the planner with classical nested feedback loops \cite{Michael2010} as a low-level controller who received position guidance commands in a receding horizon manner. We limit the maximum desired velocity in the position feedback controller to 1m/s, mitigating control uncertainty because we only focus on planning evaluation. Whenever a new depth image arrives, if the planner finds a collision-free trajectory, the position controller drops the previous trajectory and starts tracking the newfound trajectory. Fig. \ref{fig:dodge_drone}a) describes an overview of the system. We will run the entire system 1000 times for each policy (fixed yawing, DESS) in each scenario (easy, medium, hard) on an i7-1185G7 laptop; thus, we will have 6000 flight simulations.
\begin{table}[h]
\begin{center}
\begin{minipage}{200pt}
\caption{Success rate of finishing trials by policies}
\label{tab:success_rate}
\begin{tabular}{c c c c c}
\toprule
& Fixed yawing \cite{Lee2021} & \textbf{DESS (proposed)} \\ 
\midrule
Easy    & 97.6\%    & \textbf{99.7\%} \\ 
Medium  & 69.3\%    & \textbf{99.9\%} \\ 
Hard    & 55.19\%   & \textbf{99.6\%} \\
\bottomrule
\end{tabular}
\end{minipage}
\end{center}
\end{table}
\begin{figure}[htp]
\centerline{\includegraphics[width=21pc]{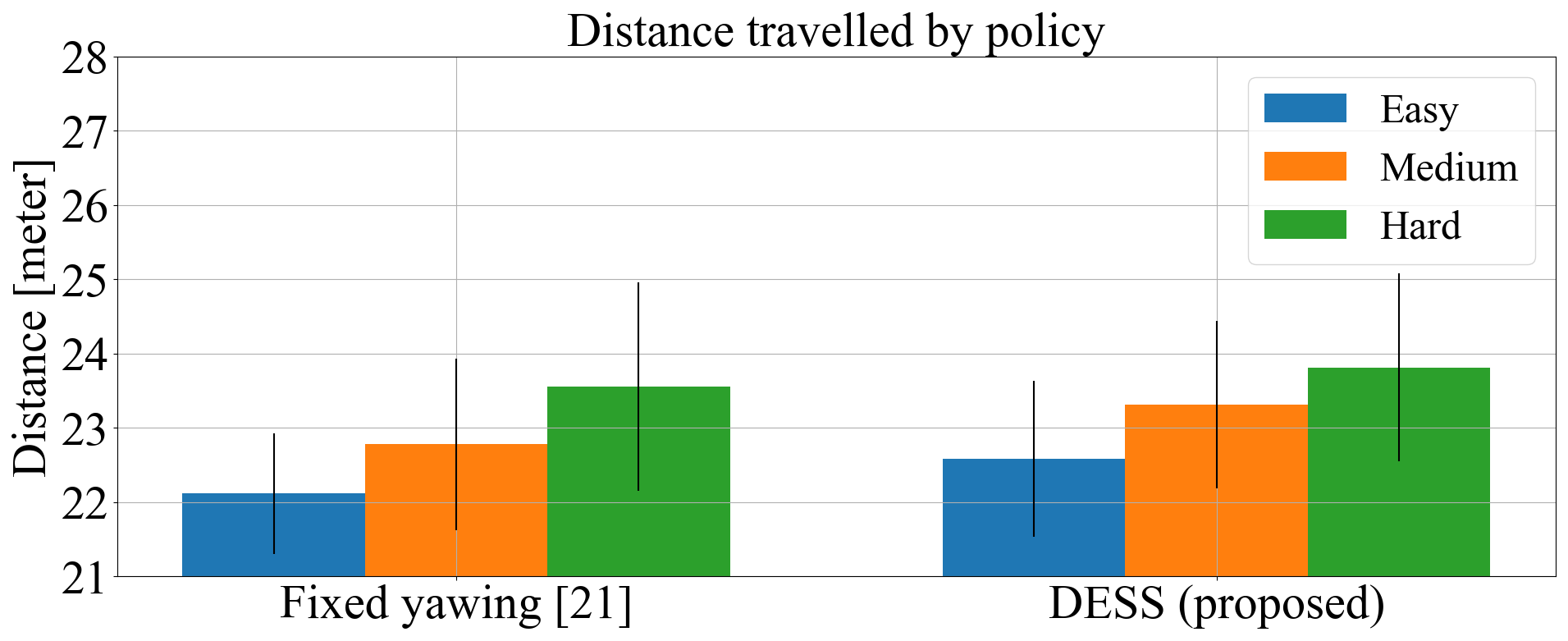}}
\caption{Traveling distances to the goal, summarizing only successful trials. For the fixed yawing method: 976 successes in the easy, 693 successes in the medium and 552 successes in the hard out of 1000 runs on each. For our DESS method: 997 successes in the easy, 999 successes in the medium and 996 successes in the hard out of 1000 runs on each.}
\label{fig:distance}
\end{figure}
\begin{figure}[htp]
\centerline{\includegraphics[width=21pc]{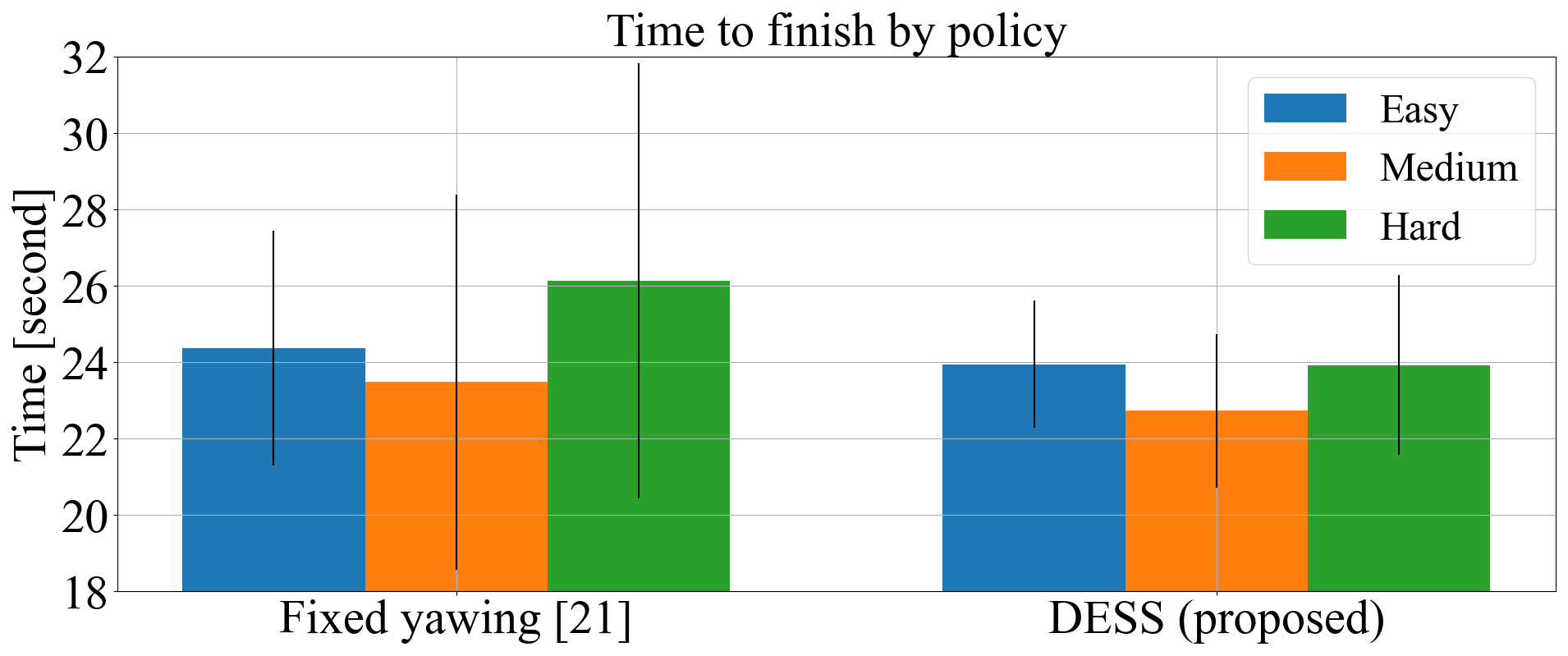}}
\caption{Traveling time to the goal, summarizing only successful trials. For the fixed yawing method: 976 successes in the easy, 693 successes in the medium and 552 successes in the hard out of 1000 runs on each. For our DESS method: 997 successes in the easy, 999 successes in the medium and 996 successes in the hard out of 1000 runs on each.}
\label{fig:completion_time}
\end{figure}
\subsection{Results}
Whenever the vehicle meets the situation of not finding any feasible trajectory (e.g., stuck in front of a huge obstacle or a wall), the DESS planner generates a steering command. Following a steering command, the robot will steer consistently until its FoV is clear enough for a normal collision-free trajectory to be generated. Thus, it continues on its mission to the goal. Being equipped with our DESS method, the drone always finds a feasible path with success rates for three scenarios are 99.7\%, 99.9\%, and 99.6\%, respectively, while simulations using the fixed yawing method will fail more often when scenarios get denser. A few failed corner cases of DESS appear only due to uncertainties of the position tracking controller. The results are summarized in Table \ref{tab:success_rate}, reflecting theoretical analysis of DESS's completeness on convex obstacles.
\par On the other hand, we will evaluate the finishing time and travel distance for successful trials of the two methods. Fig. \ref{fig:distance} shows that the average travelled distances and also their standard deviations of our DESS are approximately equal to those of fixed yawing's successful trials. This result demonstrates that DESS guarantees robustness while maintaining paths converged. However, it is noted that our method produced much more successful trials than the fixed yawing technique. Fig. \ref{fig:completion_time} illustrates that the mean values of DESS's finishing times are slightly lower than those of the fixed yawing's successful cases. It also can be noted that standard deviations of DESS are 1.7, 2.0 and 2.7 seconds for the easy, medium and hard, respectively, significantly smaller than those of the fixed yawing, which are 3.1, 4.9 and 5.7 seconds, correspondingly. This comparison shows that there are many trials where the fixed yawing takes a long time to complete, while DESS consistently reaches the goal in more steady amounts of time. That is, DESS is more robust and performs stably in environments of different dense levels. The depth-based sampling gives a head-start in terms of efficiency, allowing DESS to find the best-direction trajectory by evaluating the direction function. Therefore, DESS can find a better direction trajectory, avoiding fluctuated paths caused by lateral motions from the average velocity function. This direction function also paves a concrete platform for the depth-based steering algorithm to manipulate, enabling a higher level of autonomy and a better planning performance for our proposed planner compared to the fixed yawing approach.
\section{Conclusion}
In this paper, we have presented the depth-based sampling and steering technique that improves memoryless planners' efficiency. Our proposed sampling method exploited available depth data to constrain sampling space, excluding obscured endpoint samples, which are candidates that are not meaningful (in-collision) for planning. Thus, the enhanced planner generates a better-cost trajectory over the same amount of allocated computation time compared with the same planner that uses the uniform sampling technique. Additionally, we built a complete planning system that adopted the proposed depth-based steering algorithm based on that advantage. The steering technique equips the local planner with the capability of autonomously escaping potential stuck motions such as large convex obstacles, enabling a higher success rate for the planning system in obstacle-dense scenarios. The simulation results have demonstrated that the proposed approach outperforms the state-of-the-art technique. Since the adopted planner only guarantees instantaneous best direction at every planning sequence, extensions to this work may include a mechanism to translate it into a local optimal such as the shortest distance travelled or fastest time to escape one obstacle when integrating with other techniques.
\bmhead{Acknowledgments}
This research was supported by the Henry Sutton scholarship from Federation University Australia.
\section*{Declarations}
\begin{itemize}
\item Funding: Federation University Australia funded the research via the Henry Sutton scholarship - Application ID: 3056759.
\item Conflict of interest: The authors have no relevant financial or non-financial interests to disclose
\item Ethics approval: No ethical approval is required by this research.
\item Consent to participate: Not applicable
\item Consent for publication: This paper does not require any consent for publication.
\item Data and/or Code availability: All generated data and implementation codes for simulations will be available and maintained in the author's online repository \url{https://github.com/thethaibinh}. If the reader has further needs and questions, do not hesitate to contact the corresponding author.
\end{itemize}
\noindent
\bibliography{sn-bibliography}


\end{document}